\def\eqref#1{equation~\ref{#1}}
\def\1{\bm{1}}
\DeclareMathAlphabet{\mathsfit}{\encodingdefault}{\sfdefault}{m}{sl}
\SetMathAlphabet{\mathsfit}{bold}{\encodingdefault}{\sfdefault}{bx}{n}
\DeclareMathOperator*{\argmax}{arg\,max}
\DeclareMathOperator*{\argmin}{arg\,min}
\newcommand*{\expect}[2]{\mathop{\mathbb{E}}_{#1}\left[#2\right]}
\newcommand{\abs}[1]{\left|#1\right|}
\newcommand{\BigO}[1]{\ensuremath{\operatorname{\mathcal{O}}\left(#1\right)}}
\newcommand{\BigTheta}[1]{\ensuremath{\operatorname{\Theta}\left(#1\right)}}
\newcommand{\expb}[1]{\exp\left(#1\right)}
\newtheorem{claim}{Claim}
\title{Improving greedy core-set configurations \\ for active learning with uncertainty-scaled \\ distances}
\author{Yuchen Li \\
  Department of Computer Science\\
  University of Toronto \& Vector Institute \\
  27 King's College Cir, Toronto, ON \\
  \texttt{ychnlgy.li@utoronto.ca} \\
  % examples of more authors
   \And
   Frank Rudzicz \\
   Department of Computer Science \\
   University of Toronto \& Vector Institute \\
   27 King's College Cir, Toronto, ON \\
   \texttt{frank@cs.toronto.edu}
}
\begin{document}

\maketitle

\begin{abstract}

We scale perceived distances of the core-set algorithm by a factor of uncertainty and search for low-confidence configurations, finding significant improvements in sample efficiency across CIFAR10/100 and SVHN image classification, especially in larger acquisition sizes. We show the necessity of our modifications and
explain how the improvement is due to a probabilistic quadratic speed-up in the convergence of core-set loss, under assumptions about the relationship of model uncertainty and misclassification.

\end{abstract}
\section{Introduction}

Active learning aims to identify the most informative data to label and include in supervised training. Often, these algorithms focus on reducing model variance, representing distributional densities, maximizing expected model change, or minimizing expected generalization error \citep{kirsch2019_batchbald, settles2009_al-survey, shen2018_ner, sinha2019_adversarial-vae-al}. A unifying theme is efficient data collection, which is measured by the rate in improvement as more data are labelled. This is important when we want to identify only the most promising samples to be labelled, but also for tasks that require slow or expensive labelling \citep{ ducoffe2018_margin_al, ma2020_al-database}.

We describe active learning with the same notation as \cite{sener2018_coreset}. Suppose we wish to classify elements of a compact space $\mathcal{X}$ into labels $\mathcal{Y}=\{1,\dots,C\}$. We collect $n$ data points $\{x_i, y_i\}_{i\in[n]}\sim P_{\mathcal{X}\times\mathcal{Y}}$, but only have access to the labels of $m$ of these, denoted by their indices $s = \{s^{(i)}\in [n]\}_{i\in[m]}$. We use the learning algorithm $A_s$ on the labelled set $s$ to return the optimized parameters of the classifier, and measure performance with the loss function $l(\cdot,\cdot; A_s): \mathcal{X}\times\mathcal{Y}\to \mathbb{R}$. The goal of active learning is to produce a set of indices $s_+$ whose cardinality is limited by the labelling budget $b$, such that expected loss is minimized upon labelling and training on these elements: $\argmin_{s_+: \abs{s_+}\leq b}\quad \expect{x, y\sim P_{\mathcal{X}\times\mathcal{Y}}}{l(x, y; A_{s\cup s_+})}$ \citep{sener2018_coreset}.
In practice, we use the test set $\{x_i, y_i\}_{i\in[t]}\sim P_{\mathcal{X}\times\mathcal{Y}}$ to approximate the expectation.
The typical way to assess the data-efficiency of any particular active learning algorithm is to compare its trend of test performance across increasing labels compared to random and other sampling baselines \citep{kirsch2019_batchbald, sener2018_coreset, shen2018_ner, sinha2019_adversarial-vae-al, ducoffe2018_margin_al}.

\cite{sener2018_coreset} suggested that we can improve data-efficiency by minimizing the \textit{core-set radius}, $\delta$, defined as the maximum distance of any unlabelled point from its nearest labelled point: $\delta = \max_{i\in[n]}\min_{j\in s}\delta(x_i, x_j)$. Given the generalization error $\zeta_n$ of all labelled \textit{and} unlabelled data, and zero training error, expected error converges linearly with $\delta$ \citep{sener2018_coreset}:
\begin{equation}\label{eq:core-set-loss}
    \begin{split}
        \expect{x, y\sim P_{\mathcal{X}\times\mathcal{Y}}}{l(x, y; A_s)} &\leq \zeta_n + \frac{1}{n}\sum_{i\in[n]}l(x_i, y_i) \leq \zeta_n + \BigO{C\delta} + \BigO{\sqrt{\frac{1}{n}}}
    \end{split}
\end{equation}
\cite{sener2018_coreset} argued that generalization error for neural networks has well-defined bounds, so optimizing the rest of Equation \ref{eq:core-set-loss}, referred to as \textit{core-set loss}, is critical for active learning. Indeed, their algorithms for optimizing core-sets consistently improved over their baselines \citep{sener2018_coreset}.

Uncertainty-based sampling is particularly valuable for identifying support vectors, leading to finer classification boundaries \citep{kirsch2019_batchbald, settles2009_al-survey}. However, these methods may catastrophically concentrate their labelling budget on difficult, noisy regions between classes, as shown in Figure \ref{fig:initial_problem}.
\begin{figure}[ht]
    \centering
    \includegraphics[scale=0.55]{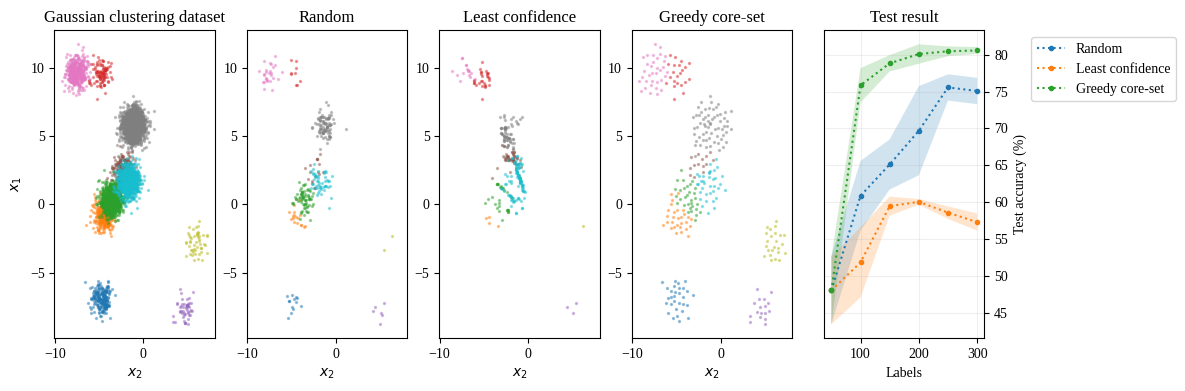}
    \caption{A toy demonstration of catastrophic concentration from least confidence acquisition functions. Columns 2, 3 and 4 show the labelling requests made by three different active learning strategies over 5 iterations on the dataset shown in column 1. Each strategy has a request budget of 50 labels per iteration. Test results show the mean and 1 standard deviation of 3 random initializations. Whereas least confidence wastes its labelling budget on difficult regions between clusters and performs worse than random acquisition, the greedy core-set strategy covers the input space effectively and achieves superior label efficiency.}
    \label{fig:initial_problem}
\end{figure}

We present a two-part solution for incorporating uncertainty into core-sets:
%that probabilistically increases core-set loss convergence from linear to quadratic in respect to core-set radii $\delta$:
\begin{enumerate}
    \item Scale distances between points by \textit{doubt} ($I$) \citep{settles2009_al-survey, shen2018_ner} before computing core-set radii:
        \begin{equation}\label{eq:least-confidence}
            \hat{\delta}_i = \delta_i\,I(x_i)\text{, where}\; I(x) = 1-\max_{y} P(y\,|\,x)
        \end{equation}
    \item Apply beam search to greedily identify the core-set configuration among $K$-candidates with the lowest maximum log-confidence to reduce the variance of core-set trajectories.
\end{enumerate}
\section{Background}\label{section:background}

\paragraph{Greedy versus optimal core-set for active learning.} The core-set radius $\delta$ is the maximum of all distances between each data point in $x_u=\{x_i: \forall i\in [n]\}$ and its closest labelled point in $x_l=\{x_i: \forall i\in s\}$ \citep{sener2018_coreset}.  The optimal core-set achieves linear convergence of core-set loss in respect to $\delta$ by finding the acquisition set $s_+$ with optimal core-set radii $\delta_{OPT}$ shown in Equation \ref{eq:opt-delta} \citep{sener2018_coreset}. Sener and Savarese used $l_2$-norm between activations of the last layer of VGG16 as $\Delta$.
\begin{minipage}[h]{\textwidth}
    \noindent
    \begin{minipage}[t]{0.48\textwidth}
        \begin{equation}\label{eq:opt-delta}
            \delta_{OPT} = \min_{s_+}\;\max_{i\in [n]}\;\min_{j\in s_+ \cup s} \Delta(x_i, x_j)
        \end{equation}
    \end{minipage}
    \hfill
    \begin{minipage}[t]{0.48\textwidth}
        \begin{equation}\label{eq:greedy-delta}
            \delta_{g} = \max_{i\in [n]}\;\min_{j\in \hat{s_+} \cup s} \Delta(x_i, x_j) \leq 2\,\delta_{OPT}
        \end{equation}
    \end{minipage}
\end{minipage}

\begin{minipage}[h]{\textwidth}
    \begin{minipage}[H]{0.42\textwidth}
        Since this problem is NP-hard \citep{cook_combo_optim}, \cite{sener2018_coreset} proposed a greedy version shown in Algorithm \ref{algo:original-greedy-coreset} with acquisitions $\hat{s_+}$ bounded above by Equation \ref{eq:greedy-delta}.
         This returns a selection mask over the data pool to signal labelling requests for elements that greedily minimize the maximum distance  between any point and its nearest labelled point. 
    \end{minipage}
    \hfill
    \begin{minipage}[H]{0.53\textwidth}
        \begin{algorithm}[H]
            \small
            \label{algo:original-greedy-coreset}
            \caption{Greedy core-set \citep{sener2018_coreset}}
            \SetAlgoLined
            \SetKwProg{Fn}{def}{:}{}
            \SetKwFunction{greedycoreset}{greedy core-set}
            \Fn{\greedycoreset{$x_u$, $x_l$, budget}}{
                $selection = [0: \forall i\in[\abs{x_u}]]$\;
                \For{$t=0,\dots,$ budget}{
                    $i = \argmax_{i\in[\abs{x_u}]}\min_{j\in[\abs{x_l}]}\Delta(x_u^{(i)}, x_l^{(j)})$\;
                    $selection^{(i)} = 1$\;
                }
                \Return $selection$\;
            }
        \end{algorithm}
    \end{minipage}
\end{minipage}

Figure \ref{fig:greed-vs-optim-vs-kmeans} shows how $\delta$ varies compared to closest-$K$-means core-sets, where the core-set consists of the closest points to optimized $K$-means.

 %Sener and Savarese also implemented a version of optimal core-sets that was robust to a specified fraction of outliers and found a small improvement from greedy core-sets \cite{sener2018_coreset}. Since the difference is small and the computational complexity of greedy core-set is many factors cheaper \cite{sener2018_coreset}, we focus on greedy core-sets.
\begin{figure}[H]
    \centering
    \includegraphics[scale=0.5]{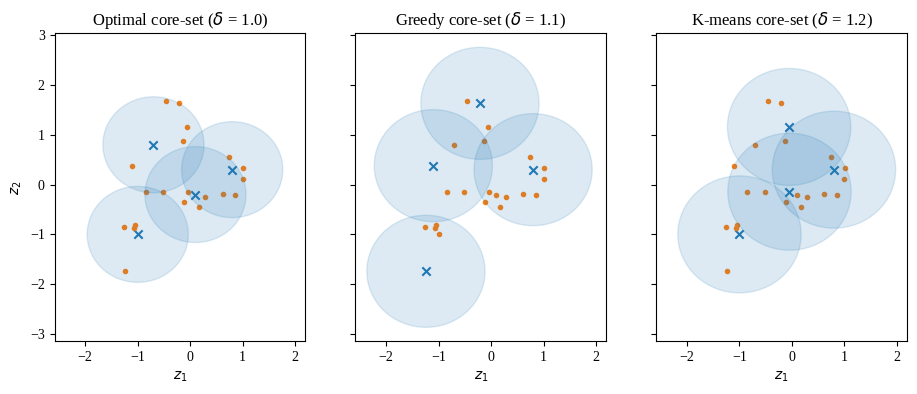}
    \caption{Core-set radii ($\delta$) differs between optimal, greedy and closest-$K$-means core-sets ($K = 4$). Crosses and dots represent the labelled and unlabelled set, respectively, and translucent circles show the $\delta$-cover. \cite{sener2018_coreset} found a $\delta$-linear upper bound on core-set loss convergence.}
    \label{fig:greed-vs-optim-vs-kmeans}
\end{figure}

\paragraph{Related techniques for batched acquisition.} Batch active learning by diverse gradient embeddings acquires batches in two steps. First, we compute loss gradients in respect to the parameters of the last layer of the classifier for each unlabelled point and its most probable label \citep{ash2020_badge}. Then, we sample from clusters of these gradients using, for instance, K-means++ to avoid catastrophic concentration \citep{ash2020_badge}. We share similar intuitions about classifier confidence and intra-batch diversity being important sources of information that may enhance active learning, but differ in that we do not optimize for model change and use core-sets for diversification because of its theoretical foundations.    

BatchBALD acquires batches that maximize the mutual information between the joint data and model parameters and was intended to overcome redundant sampling of repeated BALD \citep{kirsch2019_batchbald}. We combined this with a probabilistic technique of estimating likely core-set locations (see Appendix \ref{appendix:prob-core-sets}), but it appears that core-sets require $\delta$ minimization for core-set loss convergence.

\section{Methods}\label{section:methods}

Algorithm \ref{algo:greedy-coreset} and its dependence on Algorithm \ref{algo:coreset-radii} implement doubt-weighted greedy core-set to run on GPU. To incorporate uncertainty information, we make two key changes to the original greedy core-set algorithm. First, we compute core-sets in a warped space where distances originating from any unlabelled point diminish to zero with classification confidence.

% \begin{multicols}{2}
\begin{minipage}[t]{\textwidth}
\noindent
    \begin{minipage}[H]{.38\textwidth}
        Given inputs $x_u$ and $x_l$, which represent the unlabelled and labelled data,
        Line 2 of Algorithm \ref{algo:greedy-coreset} calls Algorithm \ref{algo:coreset-radii} to pre-compute distances of each unlabelled datum to their nearest labelled data. We scale these distances by the doubt of the classifier on the respective unlabelled data on Lines 3 and 11. Each acquisition is removed from the existing unlabelled pool, and Line 12 updates new core-set radii.
        
        Figure \ref{fig:quadrants} illustrates 
        how core-sets in these spaces preferentially cover regions of low confidence.
    \end{minipage}
    \hfill
    \begin{minipage}[H]{.56\textwidth}
        \noindent
        \small
        \begin{algorithm}[H]
            \label{algo:greedy-coreset}
            \caption{Doubt-weighted core-set}
            \SetAlgoLined
            \SetKwFunction{computemindelta}{compute\_$\min\_\delta$}
            \SetKwProg{Fn}{def}{:}{}
            \SetKwFunction{greedycoreset}{doubted\_core-set}
            \SetKwFunction{memcopy}{memory-copy}
            \SetKwInput{splice}{splice out}
            \Fn{\greedycoreset{$x_u$, $x_l$, batch\_size, budget}}{
                $\min\_\delta$ = \computemindelta{$x_u$, $x_l$, batch\_size}\;
                $\min\_\hat{\delta} = [\min\_\delta^{(i)}\cdot I(x_u^{(i)}): \forall i\in [\abs{x_u}]]]$\;
                $x_u$ = \memcopy{$x_u$}\;
                $index = [i: \forall i\in[\abs{x_u}]]$\;
                $selection = [0: \forall i\in[\abs{x_u}]]$\;
                \For{$t=0,\dots,$ budget}{
                    $i = \argmax\min\_\hat{\delta}^{(i)}$\;
                    $selection^{(index[i])} = 1$\;
                    $x = x_u^{(i)}$\;
                    \splice{$index^{(i)}$, $x_u^{(i)}$, $\min\_\delta^{(i)}$}
                    $\hat{\delta} = \Delta(x, x_u)\cdot I(x)$\;
                    $\min\_\hat{\delta} = \left[
                        \min\{\min\_\hat{\delta}^{(k)}, \hat{\delta}^{(k)}\}: \forall k \in [\abs{x_u}]
                    \right]$
                }
                \Return $selection$\;
            }
        \end{algorithm}
    \end{minipage}
\end{minipage}

\begin{figure}[H]
    \setstretch{1}
    \centering
    \includegraphics[scale=0.59]{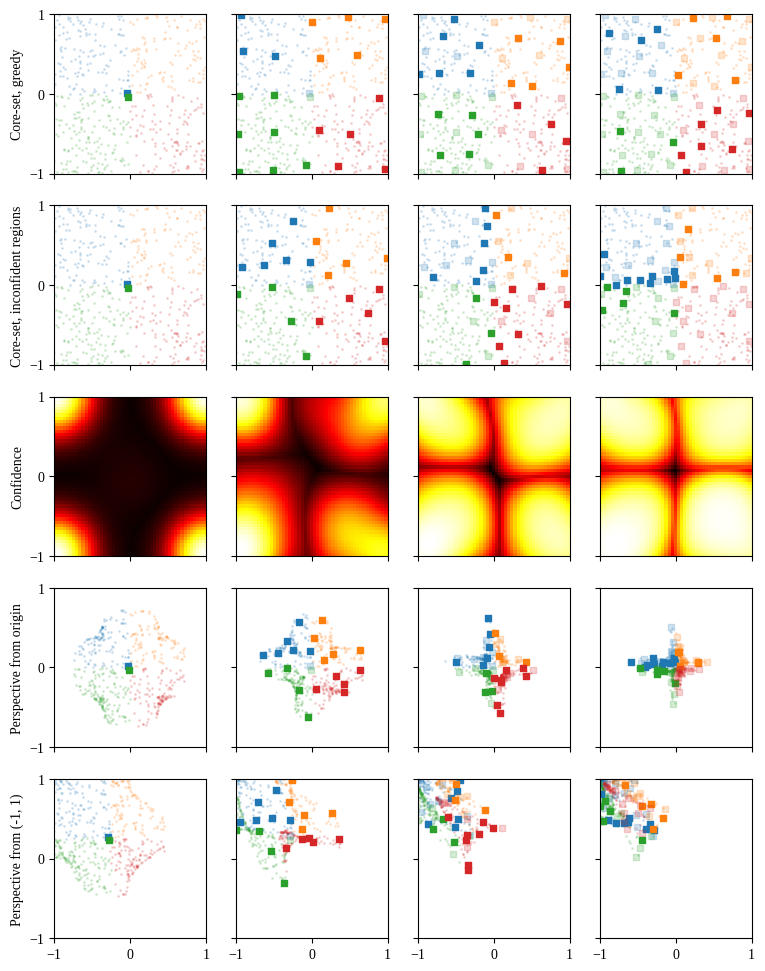}
    \caption{
        Quadrant classification of $\textbf{x} \in [-1, 1]^2$, represented as translucent dots.
        The first column shows the initial dataset and model confidence. Columns progress from left to right at a rate of 20 label acquisitions per step. Solid squares represent current acquisitions, while translucent squares represent previous acquisitions.  
        Row 1 shows that the core-sets maximize uniform coverage of the feature space. In contrast, rows 2 and 3 show preferential coverage of class boundaries when we apply the same algorithm to point-wise distances that are scaled by the doubt of the model. Rows 4 and 5 show how this new method maximizes uniform coverage in the uniquely warped space perceived from candidate points in an uncertain (origin) versus high-confidence (-1, 1) region. Note how the furthest distances from the origin are along the axes (i.e., the furthest distances from (-1, 1) occur along the boundary of the red quadrant with the green and orange quadrants, rather than (1, -1)).
    }
    \label{fig:quadrants}
\end{figure}

We choose the same $\Delta$ as \cite{sener2018_coreset}, which is $l_2$-norm between activations of the last layer of VGG16. 
Given unlabelled data of size $U=\abs{x_u}$, labelled data of size $L=\abs{x_l}$, feature size $\forall i\in [U], \forall j\in [L], D=\dim x_u^{(i)} = \dim x_l^{(j)}$, batch size $B \ll \min\{U, L\}$ and labelling budget $b$, Algorithm \ref{algo:coreset-radii} costs $\BigTheta{ULD}$ steps and $\BigTheta{B^2D}$ memory with the bottleneck on line 5. 
Excluding line 2, Algorithm \ref{algo:greedy-coreset} costs $\BigO{bUD}$ in both computation and memory with the bottleneck on line 11. 
Since both the original core-set algorithm and our modification requires fine-tuning VGG16 per addition to the training set, and computing the class probabilities requires only a single linear transformation, the final computational complexity is the same as the original core-set search. For core-set sizes 5k to 15k, compute time scales linearly from 25 s to 50 s on a NVIDIA Titan GPU.

\begin{minipage}[t]{\textwidth}
    \begin{minipage}[H]{\textwidth}
        \noindent
        \small
        \begin{algorithm}[H]
        \label{algo:coreset-radii}
        \caption{Memory-efficient core-set radii}
        \SetAlgoLined
        \SetKwFunction{computemindelta}{compute\_$\min\_\delta$}
        \SetKwProg{Fn}{def}{:}{}
        \Fn{\computemindelta{$x_u$, $x_l$, $b$}}{
            $\min\_\delta = [\infty: \forall k \in [\abs{x_u}]]$\;
            \For{$
                i = 0, \dots, 
                    \frac{
                        \abs{x_u}
                    }{b}
                $
            }{
                \For{$
                    j = 0, \dots, 
                    \frac{
                        \abs{x_l}
                    }{b}
                $}{
                    $\Tilde{\delta}_{i:i+b,j:j+b} = \Delta(x_u^{(i:i+b)}, x_l^{(j:j+b)})$\;
                    $\min\_\Tilde{\delta}_{i:i+b} = \left[\min\limits_{c\in[j, j+b]}\Tilde{\delta}_{i:i+b,j:j+b}^{(k, c)}:\forall k\in [i, i+b]\right]$\;
                    $\min\_\delta^{(i:i+b)} = \left[
                        \min\left\{
                            \min\_\delta^{(k)}, \min\_\Tilde{\delta}_{i:i+b}^{(k)}
                        \right\}: \forall k\in[i, i+b]
                    \right]$
                }
            }
            \Return $\min\_\delta$\;
        }
        \end{algorithm}
    \end{minipage}
\end{minipage}

Second, we use beam search to greedily prune and keep track of the top resulting core-set configurations with the lowest overall confidence.
Since there is no guarantee for the optimality of greedy core-sets \citep{sener2018_coreset}, we seek an orientation with the most points near classification regions of high uncertainty at the cost of increasing compute and memory complexity by a factor of the beam width.
\begin{minipage}[t]{\textwidth}
    \noindent
    \begin{minipage}[H]{.55\textwidth}
        We modify maximum normalized log probability \citep{shen2018_ner} to rank overall classifier uncertainty $\mathbb{U}$ of core-set $s$:
    \end{minipage}
    \begin{minipage}[H]{.42\textwidth}
        \begin{equation}
            \mathbb{U}(s) = -\frac{1}{\abs{s}}\sum_{x\in s}\log(1 - I(x))
        \end{equation}
    \end{minipage}
\end{minipage}
\noindent Figure \ref{fig:beam-search} shows a sample ranking of the configurations found during beam search with width $K=4$.

\begin{figure}[H]
    \centering
    \includegraphics[scale=0.55]{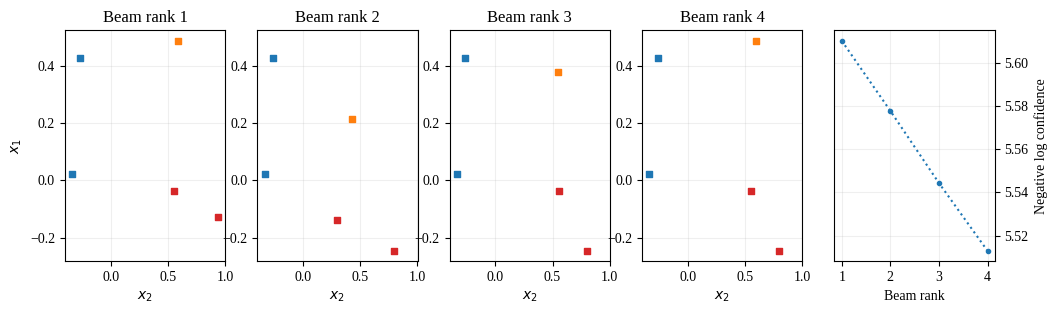}
    \caption{A sample acquisition from the toy quadrant dataset in Figure \ref{fig:quadrants}. Beam search for the top greedy core-sets ranks the configurations by increasing overall confidence (i.e., decreasing negative log confidence). Regions of high uncertainty occur near the $x_1=0$ and $x_2=0$ axes in this dataset. The left-most configuration (beam rank 1) should be selected for its dual optimality of core-set radii and low overall confidence.
    }
    \label{fig:beam-search}
\end{figure}

\paragraph{Active learning pipeline.}
For each active learning experiment, we start by randomly partitioning the full training set into an initial pool of labelled data and an unlabelled pool of features. We fine-tune the parameters of a ImageNet-1-pretrained VGG16 on this initial dataset. For each training batch size of 64, we optimize for cross-entropy loss using Adam \citep{kingma2015_adam} under default hyperparameters from PyTorch \citep{pytorch} and a learning rate of $0.01$ for CIFAR10/100 and $0.005$ for SVHN. We then use either random acquisition, the original greedy core-set algorithm, or variations of Algorithm \ref{algo:greedy-coreset} with the trained model to produce a selection mask over the unlabelled data. We enforce that the number of selected elements is equal to the labelling budget per iteration. The selected features and their labels join the training set, the model retrains with a re-initialized optimizer, and the process is repeated until the number of the labelled data reaches the specified ceiling for the experiment.

Note that we do not compare with the other baselines used in the original core-set experiments. Since the original core-set algorithm improved significantly from those baselines, we expect improvement over the original core-set algorithm to imply similar or greater improvement as well.

\begin{minipage}[t]{\textwidth}
    \noindent
    \begin{minipage}[H]{.48\textwidth}
         Table \ref{tab:training-epochs} shows the iterations that we found were necessary to roughly meet zero training error on the initial dataset (``First-pass'') and all additions to the dataset per active learning iteration (``Thereafter''). Note that validation error is not required to satisfy the convergence requirement of core-sets, so we ignore it in our experiments.
    \end{minipage}
    \hfill
    \begin{minipage}[H]{.48\textwidth}
        \captionof{table}{
        Epochs of optimization required to consistently surpass 99\% training accuracy.
        }\label{tab:training-epochs}
        \begin{tabular*}{\linewidth}{@{\extracolsep{\fill}}c|cc}
            \hline
            \multirow{2}{*}{Dataset} & \multicolumn{2}{c}{
                Training epochs
            } \\
             & First-pass & Thereafter \\
             \hline
             CIFAR10 & 30 & 12 \\
             CIFAR100 & 80 & 20 \\
             SVHN & 50 & 20 \\
             \hline
        \end{tabular*}
    \end{minipage}
\end{minipage}

For the ablation studies, we tune hyperparameters and conduct ablation studies on CIFAR10 \citep{krizhevsky2009_cifar} using a budget of 400 labels per active learning iteration and an initial dataset size of 1000 samples. We use the same hyperparameters as the ablation studies in the main experiments on CIFAR10/100 \citep{krizhevsky2009_cifar} and SVHN \citep{netzer2011_svhn}, which uses a budget of 5000 labels per iteration and a starting dataset size of 5000 samples.
\section{Results}\label{section:results}

Figure \ref{fig:ablations} shows the results of ablation studies on the small-scale version of the main experiments, where we observe that beam search for the core-set configuration with the lowest log confidence yields significant improvements over greedy core-set only if core-set radii are scaled by the uncertainty of each unlabelled sample.

\begin{figure}[h]
    \centering
    \includegraphics[scale=0.55]{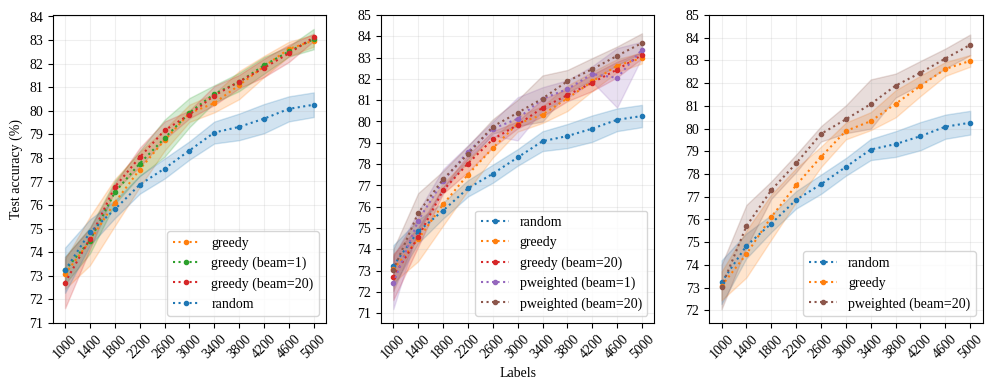}
    \caption{
        Ablation results on CIFAR10 showing the mean and 1 standard deviation derived from 5 random initializations.
        \textit{Left:} using beam search (beam=20) to greedily find the core-set configuration with the lowest log confidence has no effect.
        \textit{Middle:} weighing perceived distances by uncertainty (pweighted) results in better performance and beam search 
        reduces variance, resulting in the highest final scores. Note that using a single beam in ``pweighted" resulted in occasionally deteriorating performance, which was avoided using 20 beams.
        \textit{Right:} performance improvement over the original greedy core-set algorithm appears significant.
    }
    \label{fig:ablations}
\end{figure}

Figure \ref{fig:real-datasets} shows how our contributions significantly improve the label efficiency of greedy core-set on CIFAR10 and SVHN on large-scale active learning experiments under the same hyperparameters from the ablation studies. Our contributions increase absolute label efficiency above random acquisition.

\begin{figure}[h]
    \centering
    \includegraphics[scale=0.55]{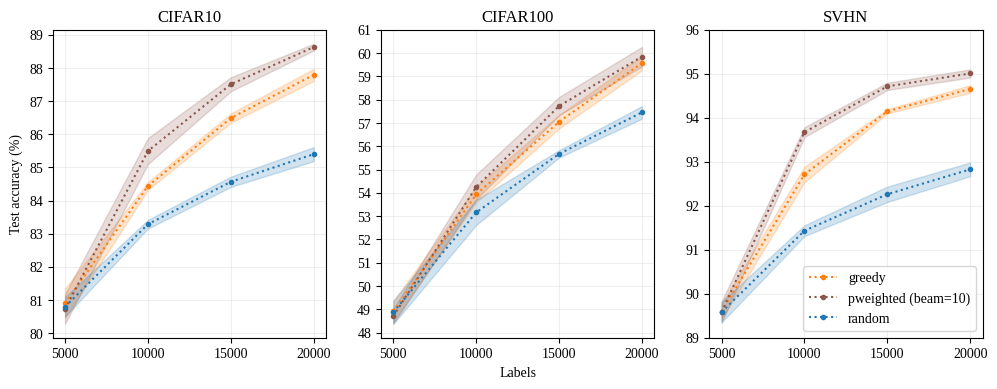}
    \caption{
        Means and 1 standard deviation derived from 5 random initializations on image classification using the same acquisition budgets as \cite{sener2018_coreset}. Weighing perceived distances by uncertainty (pweighted) and using beam search (beam=10) to find the core-set configuration with the lowest log confidence causes significant active learning improvement over vanilla core-sets. 
    }
    \label{fig:real-datasets}
\end{figure}

\paragraph{Theoretical rationale for improved label efficiency using confidence-weighted distances.} \cite{sener2018_coreset} showed that the softmax function over $c$ classes is Lipschitz continuous and we denote its constant as $\lambda_c$. We define confidence to be the max of the softmax output and assume that the confidence of any training point is 1. Consider an unspecified unlabelled point $x_u$ that is located $r$ distance away from its closest labelled point $x_i$ in the training set $s$. Equation \ref{eq:confidence} shows the bound on doubt $I$ (i.e., 1 minus confidence) as a function of distance $r$ from the nearest training example. We interpret this to be rising minimum uncertainty with increasing distance from the nearest training example, capped at 1.
\begin{equation}\label{eq:confidence}
        I(r) \leq \min\{1, r\lambda_c\}\text{\;, where\;} r = \min_{i\in s} \Delta(x_i, x_u) 
    \end{equation}

Recall that we scale $\delta$ by doubt to obtain a new radius, $\hat{\delta}$. Originally, $\delta$ was the minimum distance between each unlabelled point and its nearest labelled point; \textbf{in our case, we define $\hat{\delta}$ to be the minimum distance between each unlabelled point and its nearest \textit{unlabelled point with 0 error} that we know exists with probability $\beta$.} Then, Equation \ref{eq:hatdelta} holds with probability $\beta$.
\begin{equation}\label{eq:hatdelta}
        \hat{\delta} = \delta I(\delta) \leq \begin{cases}
            \delta & \text{if\;}\delta\lambda_c \geq 1 \\
            \delta^2\lambda_c & \text{if\;}\delta\lambda_c < 1
        \end{cases}
    \end{equation}

To clarify, we assume that for any unlabelled point, with probability $\beta$ there exists another nearby unlabelled point that behaves as if it exists in the training set already. Instead of using the distance from the nearest labelled point, the core-set loss can use the distance from this unlabelled point instead. Equation \ref{eq:quadratic-core-set-loss} shows that with probability $\beta$, the convergence of the core-set loss for all datasets where $\delta\lambda_c < 1$ now depends on a quadratic factor of $\delta$ versus the linear relationship from before.
\begin{equation}\label{eq:quadratic-core-set-loss}
    \frac{1}{n}\sum_{i\in[n]}l(x_i, y_i; A_s)
    \leq \BigO{C\hat{\delta}} + \BigO{\sqrt{\frac{1}{n}}}
    = \BigO{C\delta^2\lambda_c} + \BigO{\sqrt{\frac{1}{n}}}
    \text{, only if\;}
    \delta\lambda_c < 1
\end{equation}
Recall that the greedy core-set algorithm minimizes $\delta$ towards 0 per optimization step. This means that larger initial training sets should benefit our algorithm more, since the smaller $\delta$ will more likely yield a quadratic convergence of core-set loss.

Next, we analyze the probability $\beta$ of such an unlabelled point with 0 error existing in a radius $\hat{\delta}$ around each unlabelled point. In order to do this, we make 2 key assumptions about the relationship between model confidence and empirical misclassification.
Assume that the probability of incorrect classification is equal to the product of doubt with $\epsilon$, which represents the error rate given doubt. We further assume that $\epsilon$ equals 0 starting at any training point and is $\lambda_\epsilon$-Lipschitz continuous for any distance extending away the closest training point, as shown in Equation \ref{eq:perror}. To deduce the probability $\beta$ of at least one unlabelled point with 0 error existing between the given unlabelled point and a distance $r_0$ from its nearest labelled point, we subtract from 1 the probability of non-zero error occurring in all these unlabelled points, which Equation \ref{eq:beta-lower-bound} bounds (see proof in Appendix \ref{appendix:rough-work}, Claim \ref{claim:lower-bound-beta}).
\begin{minipage}[t]{\textwidth}
    \noindent
    \begin{minipage}[b]{0.38\textwidth}
        \begin{equation}\label{eq:perror}
            P_\text{err}(r) = I(r)\cdot \epsilon(r) \leq \lambda_c\lambda_\epsilon r^2
        \end{equation}
    \end{minipage}
    \hfill
    \begin{minipage}[b]{0.58\textwidth}
        \begin{equation}\label{eq:beta-lower-bound}
            \beta 
                = 1 - \prod^\delta_{r_0}P_\text{err}(r)^{dr}
                \geq 1 - \frac{(\lambda_c\lambda_\epsilon)^{\delta-r_0}}{\exp(\delta-r_0)^2}\left(\frac{\delta^\delta}{r_0^{r_0}}\right)^2
        \end{equation}
    \end{minipage}
\end{minipage}

Now suppose $r_0 = \delta z$, where $z\in[0, 1]$ is a scaling factor of $\delta$. Then:
\begin{align}
    \beta
    \geq 1 - \left(\left(
        \frac{\delta\sqrt{\lambda_c\lambda_\epsilon}}{e}
    \right)^{1-z}\frac{1}{z^z}\right)^{2\delta} && \text{(see Appendix \ref{appendix:rough-work}: Claim \ref{claim:delta-scale})}\label{eq:scaled-beta}
\end{align}

We will have no information on whether our algorithm improves upon vanilla core-sets when $\beta\geq 0$, or when we rely on random chance that there exists an unlabelled point with 0 error within $\hat{\delta}$ distance from any unlabelled point of interest. Equation \ref{eq:max-dist-labelled} shows the minimum distance $\delta^*$ such a point would have to be from the nearest labelled point:
\begin{equation}\label{eq:max-dist-labelled}
    \beta \geq 0 = 1 - \left(\left(
        \frac{\delta^* \sqrt{\lambda_c\lambda_\epsilon}}{e}
    \right)^{1-z}\frac{1}{z^z}\right)^{2\delta} \\
    \quad\longrightarrow\quad \delta^* \geq \frac{e\,z^\frac{z}{1-z}}{\sqrt{\lambda_c\lambda_\epsilon}}
\end{equation}

Figure \ref{fig:visualize-betas} shows slices of the lower bound on $\beta$ from Equation \ref{eq:scaled-beta}. When confidence is high (i.e., low $z$) for an unspecified unlabelled point located $\delta$ from its closest labelled point, we expect higher probabilities for an unlabelled point with 0 error to exist between $\delta z$ and $\delta$. The figure also illustrates an increase in decay rate of this probability with decreasing confidence. This makes sense because we assumed that confidence, to some degree, indicates correctness of the model on unlabelled data.

\begin{figure}[h]
    \centering
    \includegraphics[scale=0.55]{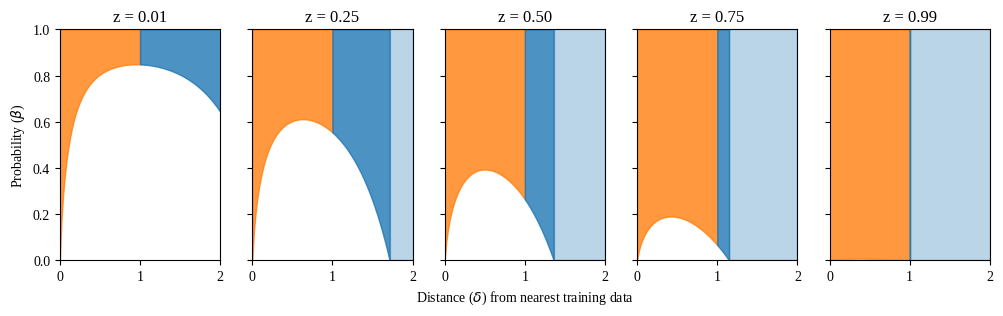}
    \caption{We fix $\lambda_c=\lambda_\epsilon=1$ and observe the probability $\beta$ of our algorithm performing better than the original greedy core-set algorithm across a range of distances $\delta$ from the nearest training point at different slices of doubt $z$. All colored regions represent possible values of $\beta$. Orange regions occur when $\delta<1/\lambda_c$, which represent the probability of quadratic convergence of core-set loss in respect to $\delta$ (see Equation \ref{eq:quadratic-core-set-loss}). All blue regions indicate values of $\beta$ in which our algorithm performs at least as well as greedy core-set search. Pale blue regions occur when $\delta>\delta^*$ (see Equation \ref{eq:max-dist-labelled}), which represent areas where we cannot reason about $\beta$. }
    \label{fig:visualize-betas}
\end{figure}

There is less information to extract about the surroundings of points with low confidence, so the lower bound on $\beta$ naturally flattens sooner, resulting in shorter $\delta^*$ and a larger space in which we cannot infer any benefit of our algorithm over vanilla greedy core-sets. The decay of the lower bound close to the origin was also expected, since the space between $\delta z$ and $\delta$ rapidly diminishes into 0 when $\delta$ shrinks, which does not allow much opportunity for an unlabelled point with 0 error to appear.

% Theorem \ref{theorem:1} summarizes the theoretical benefit in convergence speed of our algorithm.

% \begin{theorem}\label{theorem:1}
%     Suppose confidence estimation and error rate given confidence are $\lambda_c$ and $\lambda_\epsilon$-Lipschitz continuous, respectively. Suppose we apply greedy core-set search on distances that are scaled by inconfidence $z\in [0, 1]$.
%     Under the conditions described in Theorem 1 of core-set loss convergence \cite{sener2018_coreset}, if $\delta < 1/\lambda_c$ and training loss is 0, then the following bound on core-set loss holds
%     \begin{equation*}
%         \frac{1}{n}\sum_{i\in[n]}l(x_i, y_i; A_s)
%         \leq \BigO{C\delta^2\lambda_c} + \BigO{\sqrt{\frac{1}{n}}}
%     \end{equation*}
%     with an additional probability factor $\beta$
%     \begin{equation*}
%         \beta \geq 1 - \left(\left(
%             \frac{\delta\sqrt{\lambda_c\lambda_\epsilon}}{e}
%         \right)^{1-z}\frac{1}{z^z}\right)^{2\delta}
%     \end{equation*}
% \end{theorem}
\section{Discussion}

\cite{sener2018_coreset} suspected the potential of incorporating uncertainty information to improve core-sets for active learning and we successfully confirm this in our implementation of greedy core-set search on doubt-scaled distances. Our ablation studies show that doubt-scaling is critical for fast core-set loss convergence while beam search for the core-sets with low overall confidence stabilizes acquisition variance. Assuming that doubt acts as a  cheap but noisy estimate of the distance to the nearest point with zero error, the theoretical results show that our empirical improvements is caused by a probabilistic quadratic-minimization of $\delta$ improving the linear order from before.

The difference between core-set loss convergence of the ablation studies versus full experiments on CIFAR10/100 and SVHN is explained by differences in their dataset and budget sizes. Larger core-sets, whose quality consists of the diversity of the initial, uniformly-sampled dataset and future acquisitions, have smaller $\delta$ because the maximal distance between any unlabelled point and its nearest labelled point is naturally minimized as the goal of any core-set algorithm. Since $\delta$-quadratic convergence of core-set error can only occur when $\delta$ is sufficiently small (i.e., $\delta<1/\lambda_c$), it is expected that our contributions improved the difference in core-set loss using the larger core-sets of the full experiments. The smaller initial datasets and acquisitions of the ablation studies would have large $\delta$ that may exceed the threshold-criteria for $\delta$-quadratic convergence or even $\delta^*$, for which we will have no guarantee of improvement above vanilla greedy core-set search.

Our theoretical results also explain why the rates of performance improvement from our method appear to diminish faster with more data. When the labelled set saturates its coverage of the full distribution, $\delta$ diminishes towards 0. The lower bound on the probability of $\delta$-quadratic convergence diminishes to 0 regardless of model confidence in this region (see Figure \ref{fig:visualize-betas}), preventing us from reasoning about the benefit of our algorithm. Intuitively, when the training set is sufficiently large and varied such that it already covers the vast majority of the input distribution, confidence estimations may be too similar to distinguish a signal about $\delta$ from noise. 
\section{Conclusion}

Greedy core-set search in doubt-scaled space empirically and theoretically improves upon the original algorithm in active learning. We show that the magnitude of improvement is greatest for datasets that are not too small or already comprehensive of the input distribution, which maximizes the probability of quadratic convergence of core-set loss with respect to core-set radii minimization. Even in cases where the performance of our contribution equates to that of the original core-set algorithm, there is no additional computational cost. 

The value of our algorithm is a strict improvement over the original core-set, such that active learning performance improves with more labels. We suspect that our algorithm would most benefit online, large-scale active learning systems.

\newpage
\section{Reproducibility statement}
See supplementary code to replicate all results.

\bibliography{iclr2022_conference}

\begin{thebibliography}{13}
\providecommand{\natexlab}[1]{#1}
\providecommand{\url}[1]{\texttt{#1}}
\expandafter\ifx\csname urlstyle\endcsname\relax
  \providecommand{\doi}[1]{doi: #1}\else
  \providecommand{\doi}{doi: \begingroup \urlstyle{rm}\Url}\fi

\bibitem[Ash et~al.(2020)Ash, Zhang, Krishnamurthy, Langford, and
  Agarwal]{ash2020_badge}
Jordan~T. Ash, Chicheng Zhang, Akshay Krishnamurthy, John Langford, and Alekh
  Agarwal.
\newblock Deep batch active learning by diverse, uncertain gradient lower
  bounds.
\newblock \emph{International Conference for Learning Representations}, 2020.

\bibitem[Cook et~al.(1998)Cook, Cunningham, Pulleyblank, and
  Schrijver]{cook_combo_optim}
William~J. Cook, William~H. Cunningham, William~R. Pulleyblank, and Alexander
  Schrijver.
\newblock \emph{Combinatorial optimization}, volume 605.
\newblock Springer, 1998.

\bibitem[Ducoffe \& Precioso(2018)Ducoffe and Precioso]{ducoffe2018_margin_al}
Melanie Ducoffe and Frederic Precioso.
\newblock Adversarial active learning for deep networks: a margin based
  approach.
\newblock \emph{International Conference on Machine Learning}, 2018.

\bibitem[Kingma \& Ba(2015)Kingma and Ba]{kingma2015_adam}
Diederik~P. Kingma and Jimmy Ba.
\newblock Adam: a method for stochastic optimization.
\newblock \emph{International Conference for Learning Representations}, 2015.

\bibitem[Kirsch et~al.(2019)Kirsch, van Amersfoort, and
  Gal]{kirsch2019_batchbald}
Andreas Kirsch, Joost van Amersfoort, and Yarin Gal.
\newblock Batchbald: Efficient and diverse batch acquisition for deep bayesian
  active learning.
\newblock \emph{Neural Information Processing Systems}, 32:\penalty0
  7026--7037, 2019.

\bibitem[Krizhevsky(2009)]{krizhevsky2009_cifar}
Alex Krizhevsky.
\newblock Learning multiple layers of features from tiny images.
\newblock Technical report, University of Toronto, 2009.

\bibitem[Ma et~al.(2020)Ma, Ding, Das, and Swaminathan]{ma2020_al-database}
Lin Ma, Bailu Ding, Sudipto Das, and Adith Swaminathan.
\newblock Active learning for {ML} enhanced database systems.
\newblock \emph{ACM SIGMOID International Conference on Management of Data},
  pp.\  175--191, 2020.

\bibitem[Netzer et~al.(2011)Netzer, Wang, Coates, Bissacco, Wu, and
  Ng]{netzer2011_svhn}
Yuval Netzer, Tao Wang, Adam Coates, Alessandro Bissacco, Bo~Wu, and Andrew~Y.
  Ng.
\newblock Reading digits in natural images with unsupervised feature learning.
\newblock \emph{NIPS Workshop on Deep Learning and Unsupervised Feature
  Learning}, 2011.

\bibitem[Paszke et~al.(2019)Paszke, Gross, Massa, Lerer, Bradbury, Chanan,
  Killeen, Lin, Gimelshein, Antiga, Desmaison, Kopf, Yang, DeVito, Raison,
  Tejani, Chilamkurthy, Steiner, Fang, Bai, and Chintala]{pytorch}
Adam Paszke, Sam Gross, Francisco Massa, Adam Lerer, James Bradbury, Gregory
  Chanan, Trevor Killeen, Zeming Lin, Natalia Gimelshein, Luca Antiga, Alban
  Desmaison, Andreas Kopf, Edward Yang, Zachary DeVito, Martin Raison, Alykhan
  Tejani, Sasank Chilamkurthy, Benoit Steiner, Lu~Fang, Junjie Bai, and Soumith
  Chintala.
\newblock \emph{PyTorch: An Imperative Style, High-Performance Deep Learning
  Library}.
\newblock Curran Associates, Inc., 2019.

\bibitem[Sener \& Savarese(2018)Sener and Savarese]{sener2018_coreset}
Ozan Sener and Silvio Savarese.
\newblock Active learning for convolutional neural networks: A core-set
  approach.
\newblock \emph{International Conference on Learning Representations}, 2018.

\bibitem[Settles(2009)]{settles2009_al-survey}
Burr Settles.
\newblock Active learning literature survey.
\newblock Technical Report Computer Sciences Technical Report 1648, University
  of Wisconsin-Madison, 2009.

\bibitem[Shen et~al.(2018)Shen, Yun, Lipton, Kronrod, and
  Anandkumar]{shen2018_ner}
Yanyao Shen, Hyokun Yun, Zachary~C. Lipton, Yakov Kronrod, and Animashree
  Anandkumar.
\newblock Deep active learning for named entity recognition.
\newblock \emph{International Conference on Learning Representations}, 2018.

\bibitem[Sinha et~al.(2019)Sinha, Ebrahimi, and
  Darrell]{sinha2019_adversarial-vae-al}
Samarth Sinha, Sayna Ebrahimi, and Trevor Darrell.
\newblock Variational adversarial active learning.
\newblock \emph{arXiv preprint:1904.00370}, 2019.

\end{thebibliography}
\bibliographystyle{iclr2022_conference}

\appendix
\newpage
\section{Appendix}

\subsection{Rough work for claims}\label{appendix:rough-work}

\begin{claim}\label{claim:lower-bound-beta}
    \begin{align}
        \beta 
            \geq 
            1 - \frac{(\lambda_c\lambda_\epsilon)^{\delta-r_0}}{\exp(\delta-r_0)^2}\left(\frac{\delta^\delta}{r_0^{r_0}}\right)^2
    \end{align}
\end{claim}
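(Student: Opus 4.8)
The plan is to unwind the continuous (multiplicative) product $\prod_{r_0}^{\delta} P_\text{err}(r)^{dr}$ into an ordinary integral through the identity $\prod_{r_0}^{\delta} f(r)^{dr} = \exp\!\left(\int_{r_0}^{\delta} \ln f(r)\,dr\right)$, and then substitute the pointwise bound $P_\text{err}(r) \le \lambda_c\lambda_\epsilon r^2$ from Equation \ref{eq:perror}. Since $t\mapsto \ln t$ and $t\mapsto \exp t$ are both increasing, this gives
\[
    \prod_{r_0}^{\delta} P_\text{err}(r)^{dr} \;\le\; \exp\!\left(\int_{r_0}^{\delta} \ln\!\big(\lambda_c\lambda_\epsilon r^2\big)\,dr\right),
\]
and hence $\beta = 1 - \prod_{r_0}^{\delta} P_\text{err}(r)^{dr} \ge 1 - \exp\!\left(\int_{r_0}^{\delta} \ln(\lambda_c\lambda_\epsilon r^2)\,dr\right)$.

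Next I would evaluate the integral in closed form. Writing $\ln(\lambda_c\lambda_\epsilon r^2) = \ln(\lambda_c\lambda_\epsilon) + 2\ln r$ and using $\int \ln r\,dr = r\ln r - r$,
\[
    \int_{r_0}^{\delta} \ln(\lambda_c\lambda_\epsilon r^2)\,dr = (\delta-r_0)\ln(\lambda_c\lambda_\epsilon) + 2\big(\delta\ln\delta - r_0\ln r_0\big) - 2(\delta-r_0).
\]
Exponentiating converts the three additive blocks into the multiplicative factors $(\lambda_c\lambda_\epsilon)^{\delta-r_0}$, $\;\delta^{2\delta}r_0^{-2r_0} = (\delta^\delta/r_0^{r_0})^2$, and $e^{-2(\delta-r_0)} = 1/\exp(\delta-r_0)^2$, whose product is exactly the expression on the right-hand side of the claim. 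Substituting back into $\beta \ge 1 - (\,\cdot\,)$ finishes the argument.

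The computation is routine; what deserves care is the treatment of the continuous-product notation and of degenerate ranges. In particular I would note that replacing $P_\text{err}(r)$ by $\lambda_c\lambda_\epsilon r^2$ inside the $\ln$ is legitimate over the whole range, including where $\lambda_c\lambda_\epsilon r^2 \ge 1$: there $P_\text{err}$ is still a probability, so $\ln P_\text{err}(r)\le 0 \le \ln(\lambda_c\lambda_\epsilon r^2)$, and where $P_\text{err}(r)=0$ the product is $0$ and the bound holds trivially; at the endpoint $r_0 = 0$ one uses $r_0^{r_0}\to 1$ together with the convergence of $\int_0^{\delta}\ln r\,dr = \delta\ln\delta - \delta$. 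Beyond this bookkeeping there is no real obstacle — the only content is recognizing that the geometric product integral linearizes under $\ln$, after which the bound from Equation \ref{eq:perror} plus an elementary antiderivative does the rest.
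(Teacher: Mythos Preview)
Your proposal is correct and follows essentially the same route as the paper: convert the product integral to $\exp\!\big(\int_{r_0}^{\delta}\ln P_{\text{err}}(r)\,dr\big)$, replace $P_{\text{err}}(r)$ by its upper bound $\lambda_c\lambda_\epsilon r^2$ from Equation~\ref{eq:perror}, evaluate the antiderivative, and exponentiate term by term. The paper computes the antiderivative via integration by parts on $\ln(ax^2)$ rather than splitting the logarithm, but this is an inessential difference; your additional remarks on the edge cases ($P_{\text{err}}=0$, $\lambda_c\lambda_\epsilon r^2\ge 1$, $r_0\to 0$) are careful bookkeeping the paper omits.
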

\begin{proof}
    \begin{align}
        \beta 
            = 1 - \prod^\delta_{r_0}P_\text{error}(r)^{dr} &= 1 - \expb{\int^\delta_{r_0}\ln P_\text{error}(r) dr} && \\
            &\geq 1 - \expb{\int^\delta_{r_0}\ln \lambda_c\lambda_\epsilon r^2 dr} && \text{Lower bound from Eq. \ref{eq:perror}} \\
            &= 1 - \expb{r\ln(\lambda_c\lambda_\epsilon r^2) - 2r\Big|^\delta_{r_0}} && \text{(see Appendix \ref{appendix:rough-work}: Claim  \ref{claim:int-perror})} \\
            &= 1 - \frac{(\lambda_c\lambda_\epsilon)^{\delta-r_0}}{\exp(\delta-r_0)^2}\left(\frac{\delta^\delta}{r_0^{r_0}}\right)^2
                && \text{(see Appendix \ref{appendix:rough-work}: Claim  \ref{claim:rearrange-perror})}
    \end{align}
\end{proof}

\begin{claim}\label{claim:int-perror}
    \begin{equation*}
        \int \ln(ax^2)\,dx = x\ln(ax^2) - 2x + C
    \end{equation*}
\end{claim}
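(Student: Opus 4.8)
The plan is to verify the claimed antiderivative directly, which for an indefinite-integral identity is the cleanest route: it suffices to differentiate the proposed right-hand side and confirm that the result equals the integrand $\ln(ax^2)$. Writing $F(x) = x\ln(ax^2) - 2x$, the product rule gives $F'(x) = \ln(ax^2) + x\cdot\frac{d}{dx}\ln(ax^2) - 2$, and since $\frac{d}{dx}\ln(ax^2) = \frac{2ax}{ax^2} = \frac{2}{x}$, the middle term collapses to $2$, leaving $F'(x) = \ln(ax^2)$. Hence $F(x) + C$ is the general antiderivative, which is exactly the statement.

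If a constructive derivation is preferred over a verification, I would instead integrate by parts with $u = \ln(ax^2)$ and $dv = dx$, so that $du = \frac{2}{x}\,dx$ and $v = x$; this yields $\int \ln(ax^2)\,dx = x\ln(ax^2) - \int 2\,dx = x\ln(ax^2) - 2x + C$. A third, equivalent route is to split the logarithm as $\ln(ax^2) = \ln a + 2\ln x$ and apply the standard antiderivative $\int \ln x\,dx = x\ln x - x$, then recombine. All three give the same formula, so I would present the one-line differentiation check as the actual proof and perhaps remark on the integration-by-parts derivation as motivation.

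There is essentially no obstacle here; the only point deserving a word of care is the domain. The identity is understood on any interval not containing $x = 0$, and with $a$ chosen so that $ax^2 > 0$ — in particular $a > 0$, which is the relevant case in the application where $a = \lambda_c\lambda_\epsilon > 0$. On each such interval the constant $C$ may be chosen independently, the usual caveat for antiderivatives of a function with a singularity at the origin; I would simply note this and let the differentiation computation stand as the complete argument.
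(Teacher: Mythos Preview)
Your proposal is correct. The paper proves this by integration by parts with $u = \ln(ax^2)$ and $dv = dx$, which is precisely your second route; your primary argument (differentiating $F(x) = x\ln(ax^2) - 2x$ and observing $F'(x) = \ln(ax^2)$) is an equally valid and arguably cleaner verification, and your domain remark about $a > 0$ and $x \neq 0$ is a reasonable addition the paper omits. Since you already include the paper's exact derivation as an alternative, there is no substantive difference to flag.
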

\begin{proof}
    Use integration by parts. Let:
    \begin{equation*}
        \begin{aligned}[t]
            f(x) &= ax^2 \\
            f'(x) &= 2ax
        \end{aligned}
        \quad
        \begin{aligned}[t]
            u &= \ln f(x) \\
            du &= f'(x)/f(x)dx
        \end{aligned}
        \quad
        \begin{aligned}[t]
            dv &= dx \\
            v &= x
        \end{aligned}
    \end{equation*}
    Also note:
    \begin{equation*}
        \frac{x\,f'(x)}{f(x)} = \frac{2ax^2}{ax^2} = 2
    \end{equation*}
    So the original problem can be integrated by parts:
    \begin{equation*}
        \begin{split}
            \int \ln(ax^2)dx &= \int u\,dv \\
            &= uv - \int v\,du \\
            &= x\ln f(x) - \int \frac{x\,f'(x)}{f(x)}dx \\
            &= x\ln(ax^2) - 2x + C
        \end{split}
    \end{equation*}
\end{proof}

\begin{claim}\label{claim:rearrange-perror}
    \begin{equation*}
        \expb{r\ln(\lambda_c\lambda_\epsilon r^2) - 2r\Big|^\delta_{r_0}} = \frac{(\lambda_c\lambda_\epsilon)^{\delta-r_0}}{\exp(\delta-r_0)^2}\left(\frac{\delta^\delta}{r_0^{r_0}}\right)^2
    \end{equation*}
\end{claim}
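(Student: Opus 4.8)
The plan is to establish Claim~\ref{claim:rearrange-perror} by straightforward algebraic manipulation, since the identity has no analytic content --- it is pure bookkeeping with logarithms and exponentials. First I would evaluate the antiderivative at its two limits, writing
\[
    r\ln(\lambda_c\lambda_\epsilon r^2) - 2r\Big|^\delta_{r_0}
    = \delta\ln(\lambda_c\lambda_\epsilon\delta^2) - 2\delta - r_0\ln(\lambda_c\lambda_\epsilon r_0^2) + 2r_0 .
\]
Then I would split each logarithm using $\ln(\lambda_c\lambda_\epsilon r^2) = \ln(\lambda_c\lambda_\epsilon) + 2\ln r$, which lets me collect the right-hand side into three groups: the terms proportional to $\ln(\lambda_c\lambda_\epsilon)$ sum to $(\delta - r_0)\ln(\lambda_c\lambda_\epsilon)$; the terms involving $\ln r$ sum to $2(\delta\ln\delta - r_0\ln r_0)$; and the remaining linear terms sum to $-2(\delta - r_0)$.

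Next I would exponentiate, using $\expb{a+b} = \expb{a}\expb{b}$ together with $\expb{x\ln y} = y^x$ on each group separately. This turns the first group into $(\lambda_c\lambda_\epsilon)^{\delta - r_0}$, the second into $\expb{2(\delta\ln\delta - r_0\ln r_0)} = (\delta^\delta/r_0^{r_0})^2$, and the third into $\expb{-2(\delta - r_0)} = 1/\expb{\delta - r_0}^2$; multiplying the three factors yields exactly the claimed right-hand side. One point worth flagging for the reader is the notational convention that $\expb{\delta - r_0}^2$ denotes $\bigl(\expb{\delta - r_0}\bigr)^2 = \expb{2(\delta - r_0)}$, which is what makes the signs line up.

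There is no genuine obstacle here; the only things to be careful about are the sign of the lower-limit contribution when evaluating at $r_0$ (so that the $+2r_0$ and $-2r_0\ln r_0$ terms enter with the correct sign) and, as noted, the interpretation of the squared exponential in the denominator. If desired, the whole computation could be compressed into a single displayed chain of equalities using the $\exp/\ln$ identities, but splitting it into the three groups above makes the correspondence with the target expression transparent.
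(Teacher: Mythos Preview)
Your proposal is correct and follows essentially the same route as the paper: evaluate the antiderivative at the two endpoints, then use $\exp(x\ln y)=y^x$ and the additivity of $\exp$ to factor the expression into the three pieces $(\lambda_c\lambda_\epsilon)^{\delta-r_0}$, $(\delta^\delta/r_0^{r_0})^2$, and $\exp(\delta-r_0)^{-2}$. The only cosmetic difference is that the paper exponentiates first to get $(\lambda_c\lambda_\epsilon\delta^2)^\delta(\lambda_c\lambda_\epsilon r_0^2)^{-r_0}\exp(-2(\delta-r_0))$ and then splits the bases, whereas you split the logarithm before exponentiating; the two orderings are equivalent and yield the same three factors.
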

\begin{proof}
    \begin{equation*}
        \begin{split}
            \expb{r\ln(\lambda_c\lambda_\epsilon r^2) - 2r\Big|^\delta_{r_0}}
            &= \expb{\delta\ln(\lambda_c\lambda_\epsilon\delta^2)-r_0\ln(\lambda_c\lambda_\epsilon r_0^2)-2\delta + 2r_0} \\
            &= (\lambda_c\lambda_\epsilon\delta^2)^\delta(\lambda_c\lambda_\epsilon r_0^2)^{-r_0}\expb{-2(\delta-r_0)} \\
            &= \frac{(\lambda_c\lambda_\epsilon)^\delta}{(\lambda_c\lambda_\epsilon)^{r_0}}\frac{\delta^{2\delta}}{r_0^{2r_0}}\frac{1}{\expb{\delta-r_0}^2} \\
            &= \frac{(\lambda_c\lambda_\epsilon)^{\delta-r_0}}{\exp(\delta-r_0)^2}\left(\frac{\delta^\delta}{r_0^{r_0}}\right)^2
        \end{split}
    \end{equation*}
\end{proof}

\begin{claim}\label{claim:delta-scale}
\begin{equation*}
    \frac{(\lambda_c\lambda_\epsilon)^{\delta(1-z)}}{\exp(1-z)^{2\delta}}\left(\frac{\delta^\delta}{\delta^{\delta z}z^{\delta z}}\right)^2 
    = \left(\left(
        \frac{\delta\sqrt{\lambda_c\lambda_\epsilon}}{e}
    \right)^{1-z}\frac{1}{z^z}\right)^{2\delta}
\end{equation*}
\end{claim}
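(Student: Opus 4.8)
The plan is to treat Claim \ref{claim:delta-scale} as a pure algebraic identity: its left-hand side is exactly the bound of Claim \ref{claim:rearrange-perror} after the substitution $r_0 = \delta z$, and the goal is to rewrite every factor so that it carries the common outer exponent $2\delta$ that already appears on the right-hand side. So the whole proof is an exercise in collecting powers of $\delta$, $z$, $e$, and $\lambda_c\lambda_\epsilon$.

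First I would simplify the bracketed fraction: $\frac{\delta^\delta}{\delta^{\delta z} z^{\delta z}} = \delta^{\delta(1-z)}\,z^{-\delta z}$, so squaring it gives $\delta^{2\delta(1-z)}\,z^{-2\delta z}$. Next, the denominator $\exp(1-z)^{2\delta}$ contributes the factor $e^{-2\delta(1-z)}$, and the numerator $(\lambda_c\lambda_\epsilon)^{\delta(1-z)}$ I would rewrite as $\bigl((\sqrt{\lambda_c\lambda_\epsilon})^{1-z}\bigr)^{2\delta}$ using $\delta(1-z) = \tfrac12\cdot 2\delta(1-z)$. At this point the left-hand side is a product of four powers, each already expressible with exponent $2\delta$: $\bigl((\sqrt{\lambda_c\lambda_\epsilon})^{1-z}\bigr)^{2\delta}$, $\bigl(e^{-(1-z)}\bigr)^{2\delta}$, $\bigl(\delta^{1-z}\bigr)^{2\delta}$, and $\bigl(z^{-z}\bigr)^{2\delta}$. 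Pulling out the shared exponent $2\delta$, the three factors with inner exponent $1-z$ combine to $\bigl(\delta\sqrt{\lambda_c\lambda_\epsilon}/e\bigr)^{1-z}$, and the last factor supplies $1/z^z$, giving precisely $\bigl((\delta\sqrt{\lambda_c\lambda_\epsilon}/e)^{1-z}\,/\,z^z\bigr)^{2\delta}$, which is the right-hand side.

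There is no substantive obstacle here — the claim is bookkeeping — so the only thing requiring care is keeping the factors whose inner exponent is $1-z$ separate from the $z^{-z}$ factor, and applying the half-power conversion consistently to the $\lambda_c\lambda_\epsilon$ term so that its square root emerges. As a consistency check one can verify the two boundary cases: at $z=0$ both sides reduce to $(\delta\sqrt{\lambda_c\lambda_\epsilon}/e)^{2\delta}$, and at $z=1$ both sides reduce to $1$, which confirms the regrouping is correct.
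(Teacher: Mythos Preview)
Your proposal is correct and follows essentially the same approach as the paper: simplify $\delta^\delta/(\delta^{\delta z}z^{\delta z})$ to $\delta^{\delta(1-z)}z^{-\delta z}$, combine the $\lambda_c\lambda_\epsilon$ and $e$ factors into $(\sqrt{\lambda_c\lambda_\epsilon}/e)^{2\delta(1-z)}$, and then factor out the common exponent $2\delta$. The only addition is your boundary check at $z=0,1$, which is a nice sanity step but not needed for the argument.
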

\begin{proof}
\begin{equation*}
    \begin{split}
        \frac{(\lambda_c\lambda_\epsilon)^{\delta(1-z)}}{\exp(1-z)^{2\delta}}\left(\frac{\delta^\delta}{\delta^{\delta z}z^{\delta z}}\right)^2 
        &= \left(\frac{\sqrt{
            \lambda_c\lambda_\epsilon
        }}{
            e
        }\right)^{2\delta(1-z)}
        \left(\frac{\delta^{\delta(1-z)}}{z^{\delta z}}\right)^2 \\
        &= \left(\frac{\delta\sqrt{
            \lambda_c\lambda_\epsilon
        }}{
            e
        }\right)^{2\delta(1-z)}
        \frac{1}{z^{2\delta z}} \\
        &= \left(\left(
        \frac{\delta\sqrt{\lambda_c\lambda_\epsilon}}{e}
    \right)^{1-z}\frac{1}{z^z}\right)^{2\delta}
    \end{split}
\end{equation*}
\end{proof}

\subsection{Negative result: probabilistic core-sets}\label{appendix:prob-core-sets}

Instead of using a deterministic algorithm to compute core-sets, we score random batches on their likelihood of being a subset of the optimal core-set. The goal is for the concatenation of the best-scoring batches with the training set to result in a set of elements that are spread out on the feature space and occur in dense regions, minimizing $\delta$ by definition.

Suppose we are interested in classifying whether a real number is positive or not. Figure \ref{methods_example_gmm} shows the unlabelled data and the result of learning a Gaussian mixture model (GMM) over the features. We then use the trained GMM to estimate feature probabilities, which are required for computing modified batch-BALD scores (see Appendix \ref{appendix-batch-bald}). Figure \ref{methods_example_heatmap} shows that higher scores indicate features that are likely spread apart. For random batches sampled from this toy dataset, Figure \ref{methods_example_mbbald_distrib} shows that the distribution of scores form a long right tail that contains the most likely core-set centers.

\begin{figure}[H]
    \centering
    \includegraphics[scale=0.4]{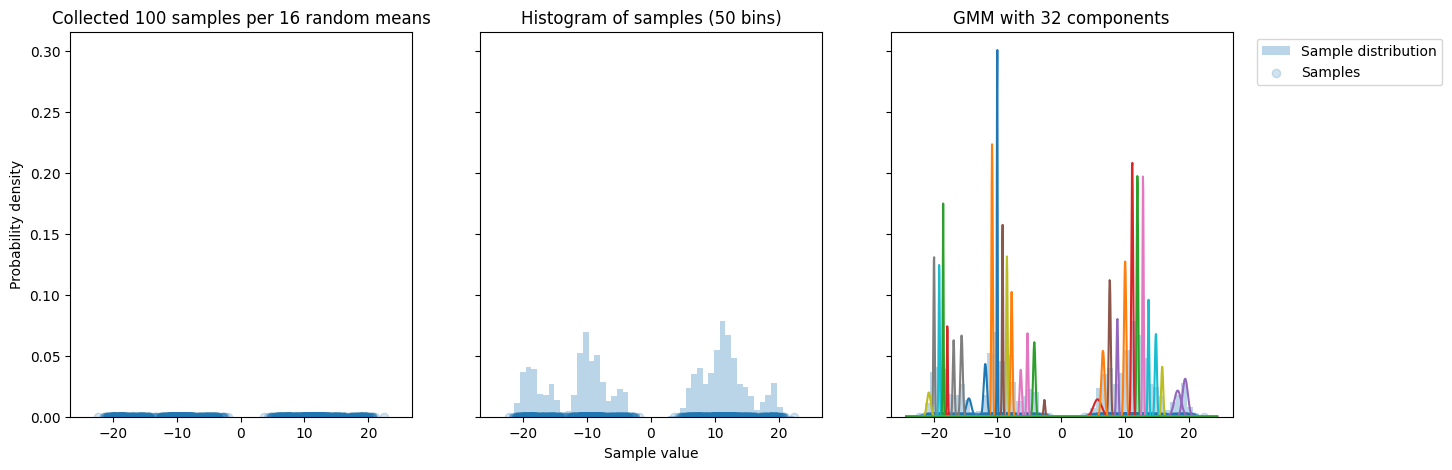}
    \caption{\textit{Left:} collected data with one feature being the value along the real axis. \textit{Middle:} the true distribution of the input features. \textit{Right:} Gaussian mixture with 32 components fit to the collected data. We purposefully overfit the GMM because the distribution of features is not known a priori and we would like high resolution for computing joint information later.}
    \label{methods_example_gmm}
\end{figure}
\begin{figure}[H]
    \centering
    \includegraphics[scale=0.45]{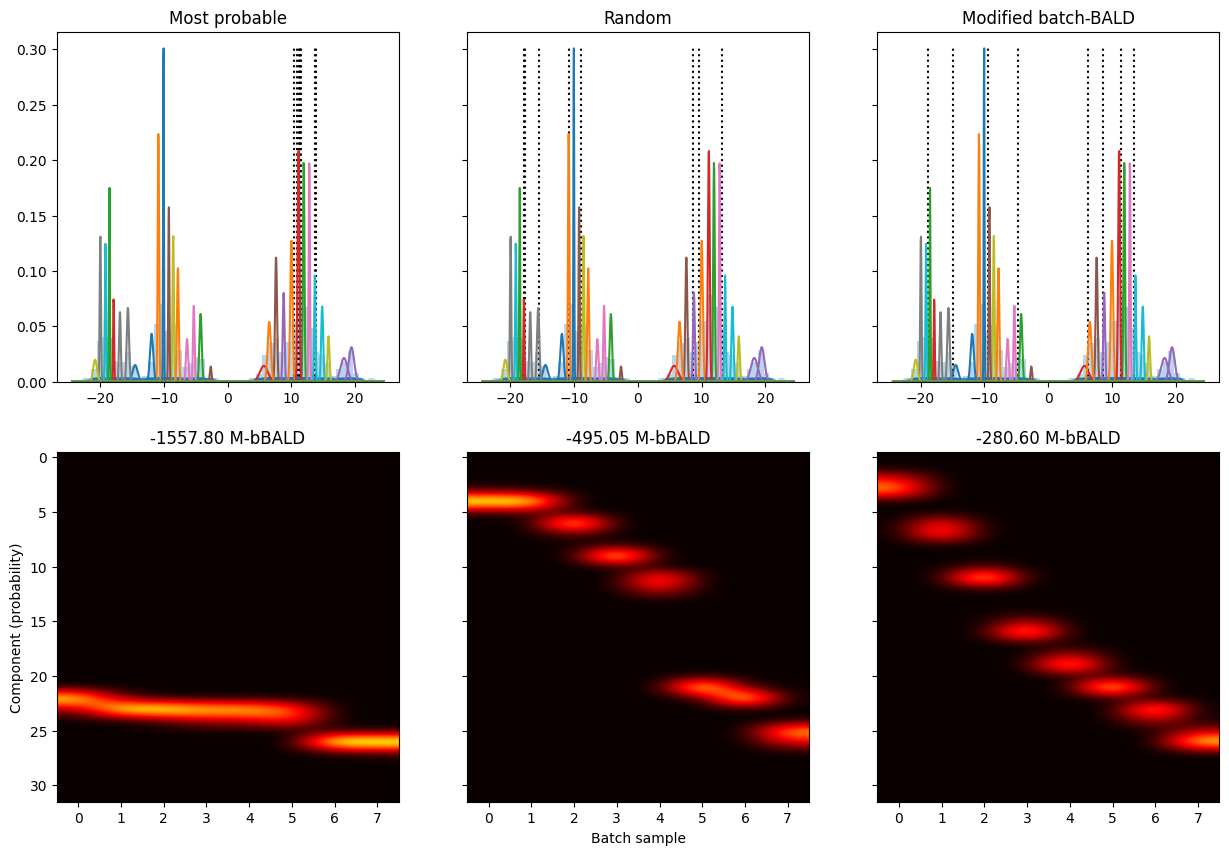}
    \caption{Given the trained Gaussian mixture model from Figure \ref{methods_example_gmm}, we estimate the probability per component for each element. We use the probabilities to compute a modified batch-BALD joint information score (M-bBALD), which is positively correlated with entropy across the components. The batches with the highest scores occur at dense regions but are spread out across the feature distribution. We want to avoid redundant labelling of elements in the left column. \textit{Top row:} each figure contains eight dotted lines that represent the locations of elements in three different batches. \textit{Bottom row:} corresponding probabilities per GMM component for each element.}
    \label{methods_example_heatmap}
\end{figure} 

\begin{figure}[ht]
    \centering
    \includegraphics[scale=0.4]{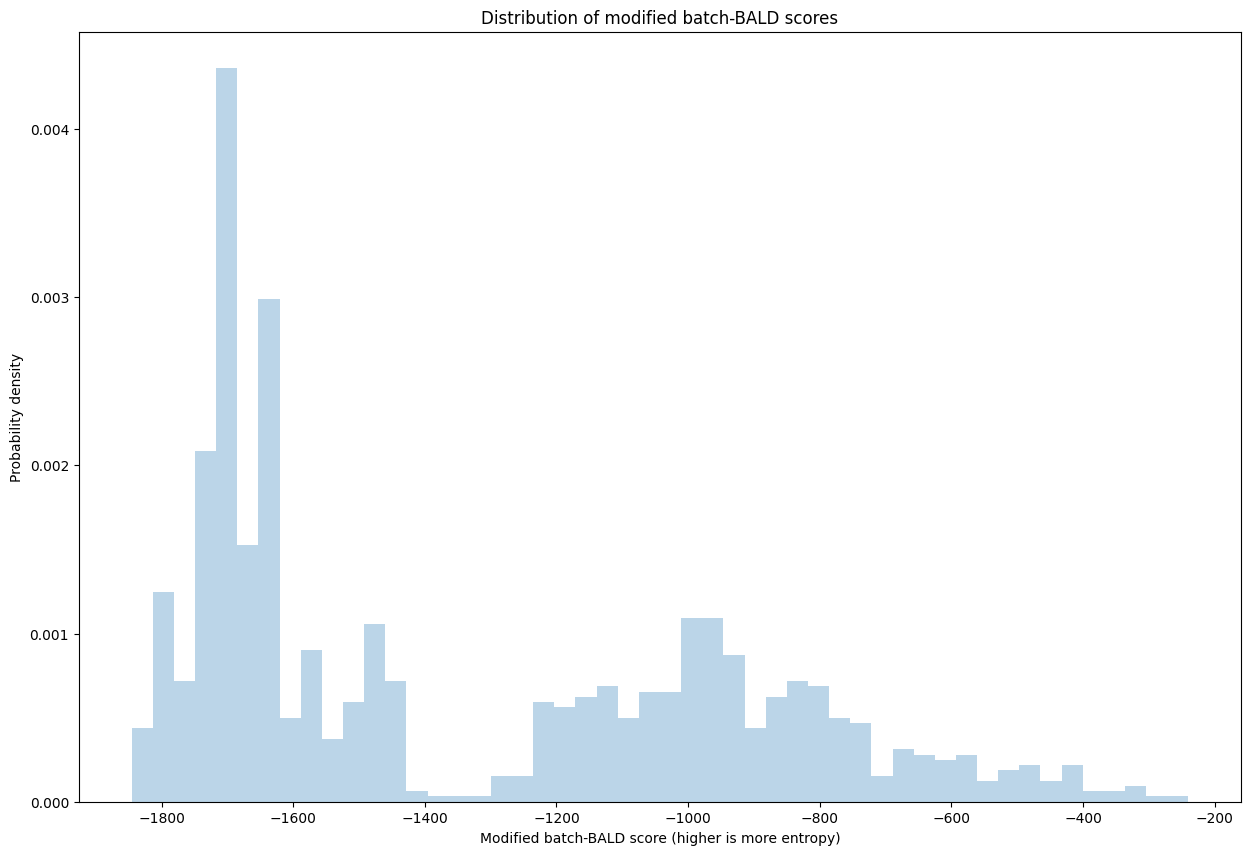}
    \caption{The distribution of modified batch-BALD scores for randomly sampled batches have a long right tail from which we mine for likely core-set elements. Range of scores depends on the number of components in the Gaussian mixture model.}
    \label{methods_example_mbbald_distrib}
\end{figure}

In the subsequent iterations, we first construct a new unlabelled data pool that contains features that have low probability to have appeared in the labelled pool, according to a fitted GMM on the labelled pool. Then, we fit a new GMM on this modified unlabelled pool and repeat the selection algorithm to search for the batch with the highest modified batch-BALD score when combined with the existing training data. We also experiment with interpolating the modified batch-BALD score with the least confidence acquisition metric.

We plot test accuracy versus number of labelled points for random acquisition (random), maximum entropy (max-entropy), least certainty (min-max-probs), probabilistic core-set (probabilistic-coreset) and an exploitative version of probabilistic core-set that interpolates with least certainty at a 9:1 ratio (probabilistic-coreset-exploitive-0.1). Figures \ref{easy} and \ref{toyresults} show that there is modest improvement of the core-set variants from the random baseline in both toy datasets, although its significance is unknown. The entropy and least certainty methods performed poorly.

\begin{figure}[ht]
    \centering
    \includegraphics[scale=0.4]{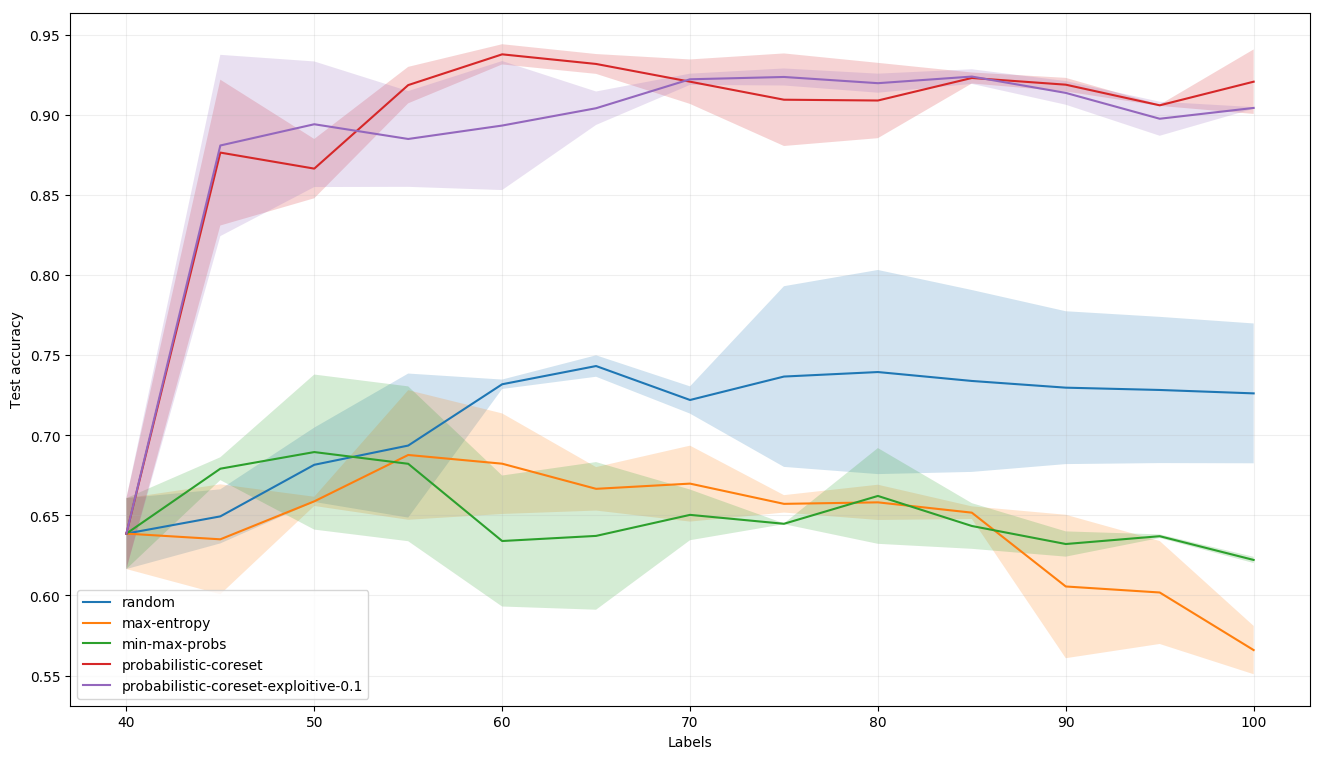}
    \caption{In the toy experiment with 0.5 standard deviation, clusters were mostly separable and core-set variants dominated all baselines. Shaded area represents one standard deviation.}
    \label{easy}
\end{figure}

\begin{figure}[ht]
    \centering
    \includegraphics[scale=0.4]{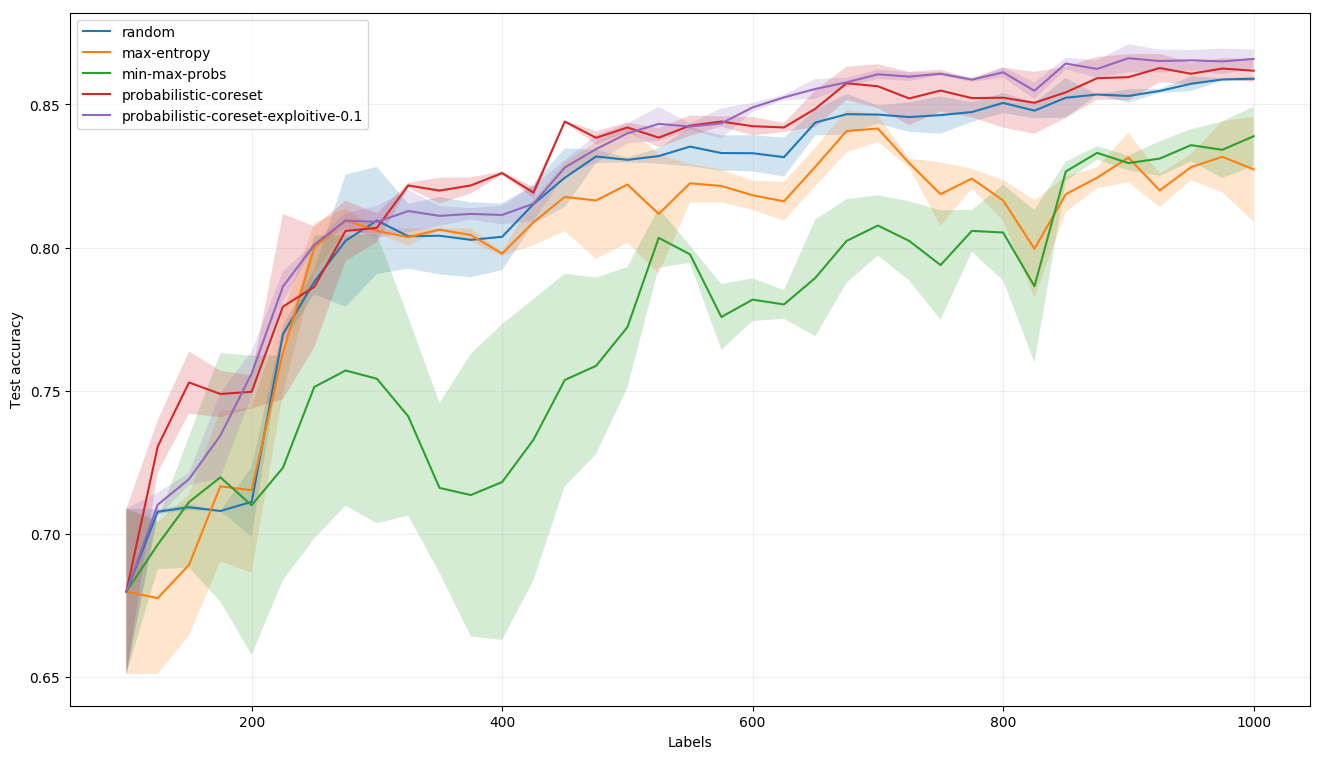}
    \caption{In the toy experiment with 1 standard deviation, clusters overlapped substantially and probabilistic core-set methods formed a modest upper bound in accuracy over all baselines. Shaded area represents one standard deviation.}
    \label{toyresults}
\end{figure}

Figure \ref{bald_acq} shows examples of data points acquired in the toy experiments by probabilistic core-set versus the points evaluated to be informative by maximum entropy (Figure \ref{entropy_acq}) and least confidence (Figure \ref{lc_acq}). Whereas the core-set variants prioritized covering the input space, the uncertainty-based methods focused on areas of overlapping clusters, which are prone to error and hard to classify.

\begin{figure}[H]
    \centering
    \includegraphics[scale=0.4]{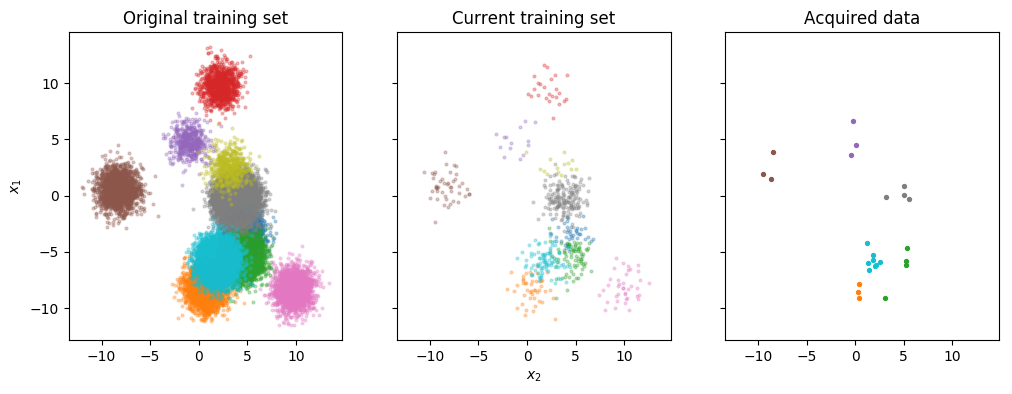}
    \caption{Batch-BALD effectively maximized the distance between elements of selected batches.}
    \label{bald_acq}
\end{figure}

\begin{figure}[H]
    \centering
    \includegraphics[scale=0.4]{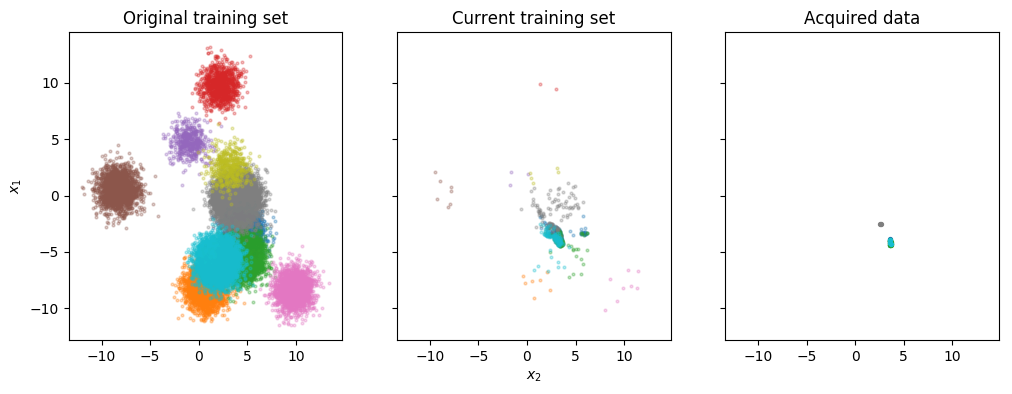}
    \caption{Maximizing entropy resulted in concentrated sampling in the most uncertain regions.}
    \label{entropy_acq}
\end{figure}

\begin{figure}[H]
    \centering
    \includegraphics[scale=0.4]{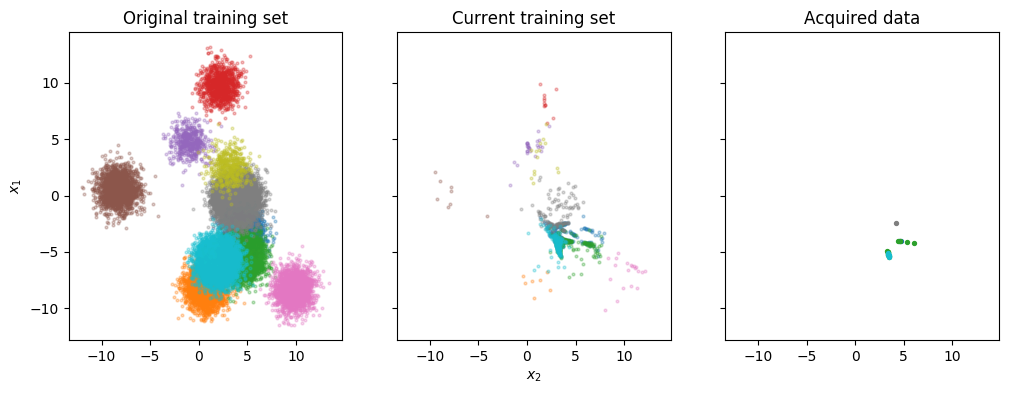}
    \caption{Least confidence also concentrates sampling along uncertain regions.}
    \label{lc_acq}
\end{figure}
\vspace{-10pt}
The poor performance of entropy and uncertainty methods for large batch acquisition in a noisy classification dataset agrees with existing work \cite{settles2009_al-survey, kirsch2019_batchbald, sener2018_coreset}. The cause of this is wasteful labelling requests in uncertain regions of features that turned out to be inseparable. In contrast, core-set variants and random acquisition are successful because they covered the majority of the input space.

The effect of increasingly difficult separability on acquisition function efficiency is clear in the toy data with 0.5 versus 1 standard deviation. When multiple class distributions overlap substantially, their joint distribution density is sampled more frequently under the core-set variants, which is harmful because those samples do not improve test accuracy for noisy class boundaries. This suggests that class inseparability may play some role in the poor performance of the core-set variants.

Overall, probabilistic core-sets barely improved from random acquisitions and cost more computation than Algorithm \ref{algo:greedy-coreset}. Like Sener and Savarese \cite{sener2018_coreset}, we also conclude with the belief that any method that depends on distributional density sampling will have difficulty exceeding random sampling at an unknown test because of the obvious fact that i.i.d. samples are already well-represented in the target distribution. Then, the main beneficial effect of these density sampling techniques is to reduce redundancy, but this may be a rare phenomenon in the typical high dimensional representations of under-determined and nonlinear classification tasks.

\subsection{Batch-BALD evaluates the mutual information of batches of data}\label{appendix-batch-bald}

Given a distribution of model parameters, Bayesian active learning by disagreement (BALD) evaluates the information of a single data point as its marginal entropy penalized with the average entropy across the parameter distribution \cite{kirsch2019_batchbald}. Intuitively, this selects for samples that elicit low overall certainty from the Bayesian model, but high individual certainty from the competing hypotheses sampled from its parameter distribution. Naive application of BALD to a batch of data may lead to the overestimation of mutual information between elements within the batch \cite{kirsch2019_batchbald}. On the other hand, Batch-BALD scores their joint information \cite{kirsch2019_batchbald}.

The Batch-BALD information metric is useful for identifying likely and different core-set centers in two important but different ways from its original setting. First, we fit a GMM and sample its means $\theta$ from $P(\theta)$, which we assume to be uniform. We use these Gaussian means to estimate $P(y|\textbf{x}, \theta)$. Second, since there may exist multiple means that cover the same peak, optimizing for batch BALD identifies peaks with high overall certainty that have low likelihood of intersecting with other peaks.

\end{document}